\theoremstyle{definition}
\newtheorem{lemma}{Lemma}
\newtheorem{theorem}{Theorem}
\newtheorem{hypothesis}{Hypothesis}
\DeclareMathOperator{\rank}{rank}
\newcommand{\TTr}{\mathcal{M}_{\textbf{r}}}
\newcommand{\Xm}{\mathcal{X}^{(s, t)}}
\newcommand{\Xind}{\mathcal{X}^{i_1 i_2 \hdots i_d}}
\newcommand{\Xt}{\mathcal{X}}
\title{Expressive power of recurrent neural networks}
\author{Valentin Khrulkov \\
Skolkovo Institute of Science and Technology \\
\texttt{valentin.khrulkov@skolkovotech.ru} \\
\And
Alexander Novikov \\
National Research University\\
Higher School of Economics \\
Institute of Numerical Mathematics RAS\\
\texttt{novikov@bayesgroup.ru} \\
\AND
Ivan Oseledets \\
Skolkovo Institute of Science and Technology \\
Institute of Numerical Mathematics RAS\\
\texttt{i.oseledets@skoltech.ru}
}
\begin{document}
\maketitle
	
\begin{abstract}
Deep neural networks are surprisingly efficient at solving practical tasks, but the theory behind this phenomenon is only starting to catch up with the practice. Numerous works show that depth is the key to this efficiency. A certain class of deep convolutional networks -- namely those that correspond to the Hierarchical Tucker (HT) tensor decomposition -- has been proven to have exponentially higher expressive power than shallow networks. I.e. a shallow network of exponential width is required to realize the same score function as computed by the deep architecture. In this paper, we prove the expressive power theorem (an exponential lower bound on the width of the equivalent shallow network) for a class of recurrent neural networks -- ones that correspond to the Tensor Train (TT) decomposition. This means that even processing an image patch by patch with an RNN can be exponentially more efficient than a (shallow) convolutional network with one hidden layer. Using theoretical results on the relation between the tensor decompositions we compare expressive powers of the HT- and TT-Networks. We also implement the recurrent TT-Networks and provide numerical evidence of their expressivity.

\end{abstract}

\section{Introduction}
Deep neural networks solve many practical problems both in computer vision via Convolutional Neural Networks (CNNs) (\cite{lecun1995convolutional,szegedy2015going,he2016deep}) and in audio and text processing via Recurrent Neural Networks (RNNs) (\cite{graves2013speech,mikolov2011extensions,gers1999learning}). However, although many works focus on expanding the theoretical explanation of neural networks success (\cite{martens2014expressive,delalleau2011shallow, cohen2016expressive}), the full theory is yet to be developed.

One line of work focuses on \emph{expressive power}, i.e. proving that some architectures are more expressive than others.
\cite{cohen2016expressive} showed the connection between Hierarchical Tucker (HT) tensor decomposition and CNNs, and used this connection to prove that deep CNNs are exponentially more expressive than their shallow counterparts. However, no such result exists for Recurrent Neural Networks.
The contributions of this paper are three-fold.
\begin{enumerate}
\item We show the connection between recurrent neural networks and Tensor Train decomposition (see Sec.~\ref{sec:rnn_tt});
\item We formulate and prove the expressive power theorem for the Tensor Train decomposition (see Sec.~\ref{sec:expressive_power_proof}), which -- on the language of RNNs -- can be interpreted as follows: to (exactly) emulate a recurrent neural network, a shallow (non-recurrent) architecture of  exponentially larger width is required;
\item Combining the obtained and known results, we compare the expressive power of recurrent (TT), convolutional (HT), and shallow (CP) networks with each other (see~\cref{tab:comp}).
\end{enumerate}
\begin{figure}[htb!]
\centering
\includegraphics[width=0.6\linewidth]{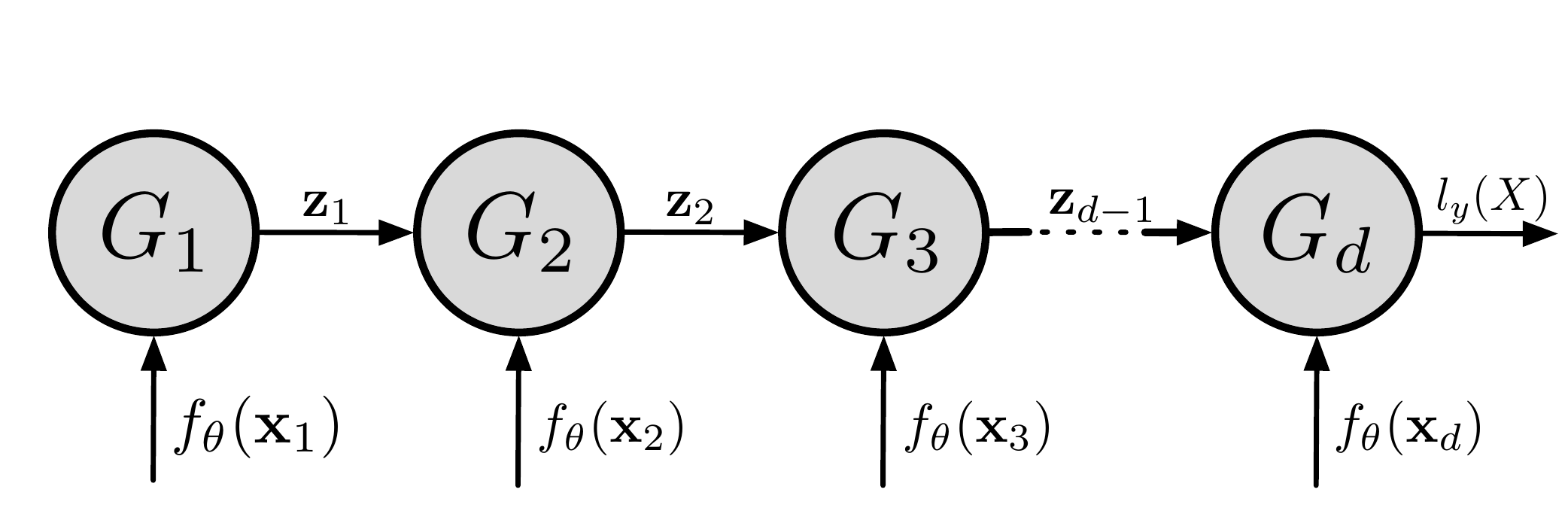}
\caption{Recurrent-type neural architecture that corresponds to the Tensor Train decomposition. Gray circles are bilinear maps (for details see \cref{sec:rnn_tt}).}
\label{fig:tt-network}
\end{figure}
\section{Deep Learning and Tensor Networks \label{sec:dl_tn}}

In this section, we review the known connections between tensor decompositions and deep learning and then show the new connection between Tensor Train decomposition and recurrent neural networks.

Suppose that we have a classification problem and a dataset of pairs $\{(X^{(b)}, y^{(b)}) \}_{b=1}^N$. Let us assume that each object $X^{(b)}$ is represented as a sequence of vectors
\begin{equation}\label{eq:data_rep}
X^{(b)} = (\textbf{x}_1, \textbf{x}_2, \hdots \textbf{x}_d), \quad \textbf{x}_k \in \mathbb{R}^{n},
\end{equation}
which is often the case. To find this kind of representation for images, several approaches are possible. The approach that we follow is to split an image into patches of small size, possibly overlapping, and arrange the vectorized patches in a certain order. 
\begin{figure}[h!]
\centering
\includegraphics[width=0.4\linewidth]{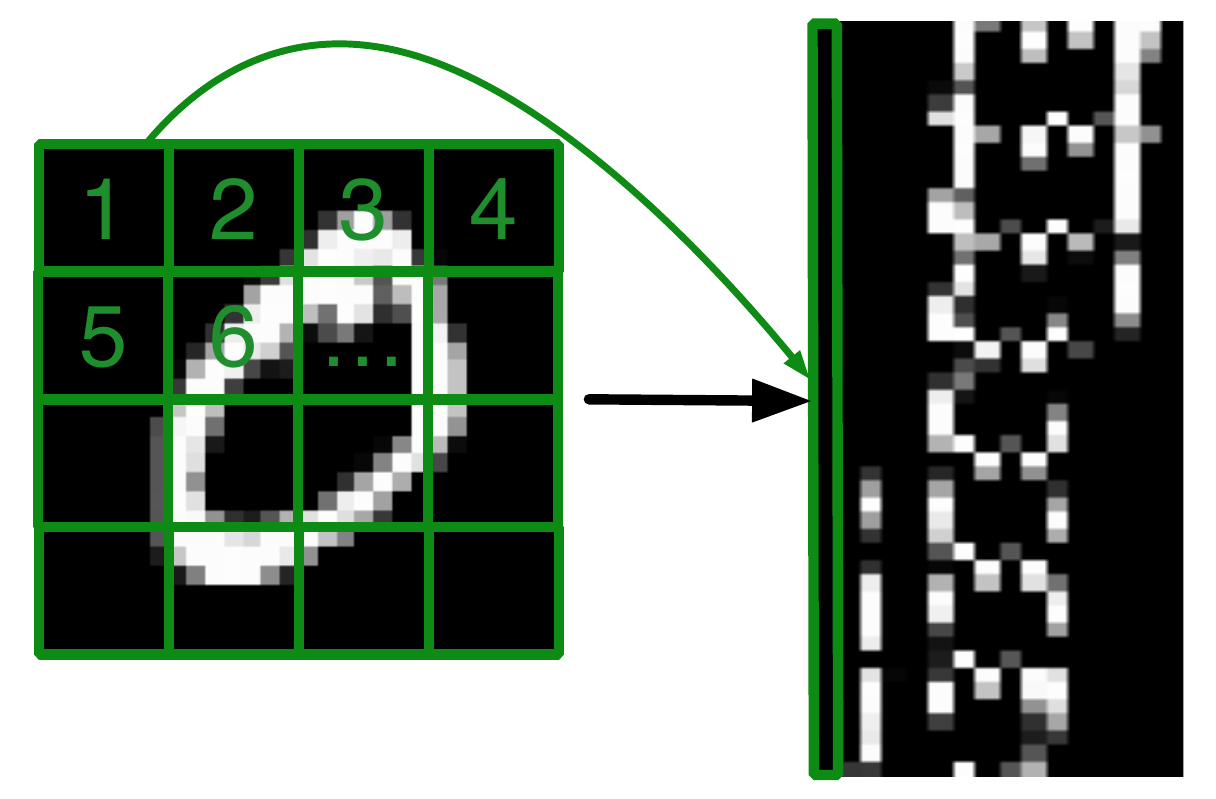}

\caption{Representation of an image in the form of \cref{eq:data_rep}. A window of size $7 \times 7$ moves across the image of size $28 \times 28$ extracting image patches, which are then vectorized and arranged into a matrix of size $49 \times 16$.}
\label{pic:img_rep}
\end{figure}
An example of this procedure is presented on \cref{pic:img_rep}.

We use lower-dimensional \emph{representations} of $\{ \textbf{x}_k \}_{k=1}^d$. For this we introduce a collection of parameter dependent feature maps
$
\{ f_{\theta_\ell}: \mathbb{R}^n \to \mathbb{R} \}_{\ell=1}^m,
$  
which are organized into a representation map
$$
f_{\theta} : \mathbb{R}^n \to \mathbb{R}^m.
$$
A typical choice for such a map is 
$$
f_{\theta}(\textbf{x}) = \sigma (A \textbf{x} + b),
$$
that is an affine map followed by some nonlinear activation $\sigma$. In the image case if $X$ was constructed using the procedure described above, the map $f_{\theta}$ resembles the traditional convolutional maps -- each image patch is projected by an affine map with parameters shared across all the patches, which is followed by a pointwise activation function. 

Score functions considered in \cite{cohen2016expressive} can be written in the form
\begin{equation}\label{eq:weights-features-dot}
l_y(X) = \langle  \mathcal{W}_y, \Phi(X) \rangle,
\end{equation}
where $\Phi(X)$ is a \emph{feature tensor}, defined as 
\begin{equation}\label{eq:feature-tens}
\Phi(X)^{i_1 i_2 \hdots i_d} = f_{\theta_{i_1}} (\textbf{x}_1) f_{\theta_{i_2}}(\textbf{x}_2) \hdots f_{\theta_{i_d}} (\textbf{x}_d),
\end{equation}
and $\mathcal{W}_y \in \mathbb{R}^{m \times m \times \hdots m}$ is a trainable weight tensor. Inner product in \cref{eq:weights-features-dot} is just a total sum of the entry-wise product of $\Phi(X)$ and $\mathcal{W}_y$. It is also shown that the hypothesis space of the form \cref{eq:weights-features-dot} has the universal representation property for $m \to \infty$. Similar score functions were considered in \cite{novikov2016exponential, NIPS2016_6211}. 

Storing the full tensor $\mathcal{W}_y$ requires an exponential amount of memory, and to reduce the number of degrees of freedom one can use a \emph{tensor decompositions}. Various decompositions lead to specific network architectures and in this context, expressive power of such a network is effectively measured by \emph{ranks} of the decomposition, which determine the complexity and a total number of degrees of freedom. For the Hierarchical Tucker (HT) decomposition, \cite{cohen2016expressive} proved the expressive power property, i.e. that for almost any tensor $\mathcal{W}_y$ its HT-rank is exponentially smaller than its CP-rank. We analyze Tensor Train-Networks (TT-Networks), which correspond to a recurrent-type architecture. We prove that these networks also have exponentially larger representation power than shallow networks (which correspond to the CP-decomposition). 

\section{Tensor formats reminder}
In this section we briefly review all the necessary definitions. As a $d$-dimensional tensor $\Xt$ we simply understand a multidimensional array:
$$\Xt \in \mathbb{R}^{n_1 \times n_2 \times \hdots \times n_d}.$$
To work with tensors it is convenient to use their \emph{matricizations}, which are defined as follows. Let us choose 
some subset of axes $s = \lbrace i_1, i_2 \hdots i_{m_s} \rbrace$ of $\Xt$, and denote its compliment by $t = \lbrace j_1, j_2 \hdots j_{{d-m_s}} \rbrace$, e.g. for a $4$ dimensional tensor $s$ could be $\lbrace 1, 3 \rbrace$ and $t$ is $\lbrace 2, 4 \rbrace$. Then matricization of $\Xt$ specified by $(s, t)$ is a matrix
$$\Xm \in \mathbb{R}^{n_{i_1}n_{i_2}\hdots n_{i_{m_s}} 
\times 
n_{j_1}n_{j_2} \hdots n_{j_{d-m_s}}},
$$
obtained simply by transposing and reshaping the tensor $\Xt$ into matrix, which in practice e.g. in \texttt{Python}, is performed using \texttt{numpy.reshape} function. Let us now introduce tensor decompositions we will use later. 

\subsection{Canonical}
Canonical decomposition, also known as CANDECOMP/PARAFAC or CP-decomposition for short (\cite{harshman1970foundations,carroll1970analysis}), is defined as follows
\begin{equation}\label{eq:candec}
\Xind = \sum_{\alpha=1}^{r} \textbf{v}_{1, \alpha}^{i_1}\textbf{v}_{2, \alpha}^{i_2} \hdots 
 \textbf{v}_{d, \alpha}^{i_d}, \quad \textbf{v}_{i, \alpha} \in \mathbb{R}^{n_i}.
\end{equation}
The minimal $r$ such that this decomposition exists is called the \emph{canonical} or \emph{CP-rank} of $\Xt$. We will use the following notation
$$\rank_{CP} \Xt = r.$$
When $\rank_{CP} \Xt = 1$ it can be written simply as 
$$\Xind = \textbf{v}_{1}^{i_1} \textbf{v}_{2}^{i_2} \hdots \textbf{v}_{d}^{i_d},$$
which means that modes of $\Xt$ are perfectly separated from each other. 
Note that storing all entries of a tensor $\Xt$ requires $O(n^d)$ memory, while its canonical decomposition takes only $O(dnr)$. However, the problems of determining the exact CP-rank of a tensor and finding its canonical decomposition are NP-hard, and the problem of approximating a tensor by a tensor of lower CP-rank is ill-posed.

\subsection{Tensor Train}
A tensor $\Xt$ is said to be represented in the Tensor Train (TT) format (\cite{oseledets2011tensor})
if each element of $\Xt$ can be computed as follows
\begin{equation}\label{eq:tt-decomp}
\Xind = \sum_{\alpha_1=1}^{r_1} \sum_{\alpha_2=1}^{r_2} \hdots \sum_{\alpha_{d-1}=1}^{r_{d-1}} G_{1}^{i_1 \alpha_{1}} G_{2}^{\alpha_{1} i_2 \alpha_{2}} \hdots G_{d}^{\alpha_{d-1} i_d},
\end{equation}
where the tensors $G_{k} \in \mathbb{R}^{r_{k-1} \times n_k \times r_k}$ ($r_0 = r_d=1$ by definition) are the so-called \textit{TT-cores}. The element-wise minimal ranks $\textbf{r} = (r_1, \hdots r_{d-1})$ such that decomposition~\eqref{eq:tt-decomp} exists are called TT-ranks
$$\rank_{TT} \Xt = \textbf{r}.$$

Note that for fixed values of $i_1, i_2 \hdots, i_d$, the right-hand side of~\cref{eq:tt-decomp} is just a product of matrices
$$G_{1}[1, i_1, :] G_{2}[:, i_2, :] \hdots G_{d}[:, i_d, 1].$$
Storing $\Xt$ in the TT-format requires $O(dnr^2)$ memory and thus also achieves significant compression of the data. Given some tensor $\Xt$, the algorithm for finding its TT-decomposition is constructive and is based on a sequence of Singular Value Decompositions (SVDs), which makes it more numerically stable than CP-format. We also note that when all the TT-ranks equal to each other
$$\rank_{TT} \Xt = (r, r, \hdots, r),$$
we will sometimes write for simplicity
$$\rank_{TT} \Xt = r.$$

\subsection{Hierarchical Tucker}
A further generalization of the TT-format leads to the so-called Hierarchical Tucker (HT) format. The definition of the HT-format is a bit technical and requires introducing the \emph{dimension tree} \cite[Definition 3.1]{grasedyck2010hierarchical}. In the next section we will provide an informal introduction into the HT-format, and for more details, we refer the reader to \cite{grasedyck2010hierarchical, grasedyck2011introduction, hackbusch2012tensor}.

\section{Architectures based on Tensor Decompositions} \label{sec:rnn_tt}

\begin{figure}[h!]
\centering
\begin{subfigure}[h]{0.25\textwidth}
  \includegraphics[width=1.0\textwidth]{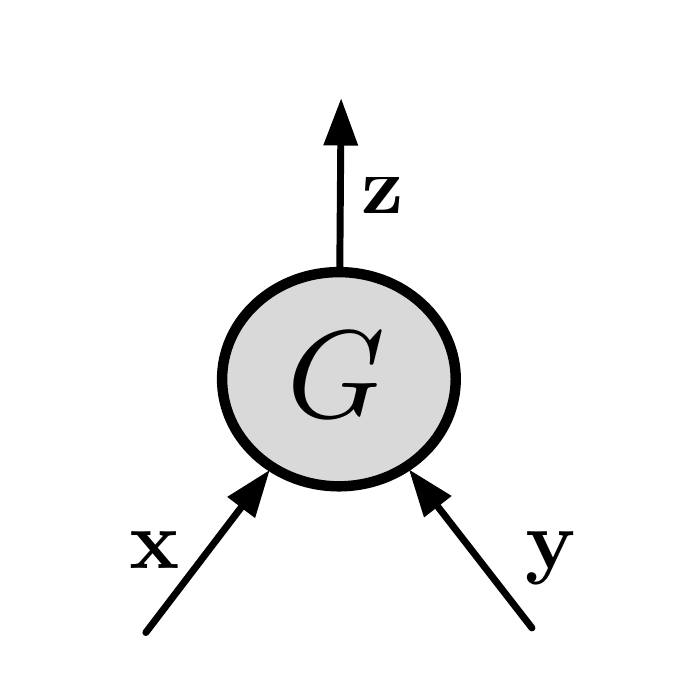}
 \caption{Bilinear unit}
 \end{subfigure}
 \hspace{0.1\textwidth}
 \begin{subfigure}[h]{0.25\textwidth}
  \includegraphics[width=1.0\textwidth]{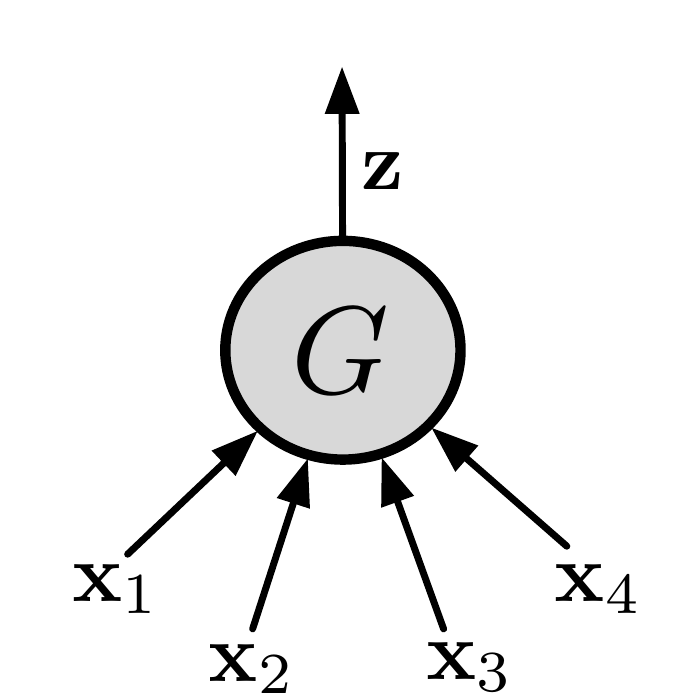}
 \caption{Multilinear unit}
 \end{subfigure}

 \caption{Nodes performing multilinear map of their inputs. $d$-linear unit is specified by a $d+1$ dimensional core $G$.}
  \label{fig:multil}
\end{figure}
To construct the tensorial networks we  introduce \emph{bilinear} and \emph{multilinear} units, which perform a bilinear (multilinear) map of their inputs (see \cref{fig:multil} for an illustration).  Suppose that $\textbf{x} \in \mathbb{R}^n, \textbf{y} \in \mathbb{R}^m$ and $G \in \mathbb{R}^{n \times m \times k}$. Then a bilinear unit $G$ performs a bilinear map $G: \mathbb{R}^{n} \times \mathbb{R}^m \to \mathbb{R}^k$, defined by the formula

\begin{equation}
\begin{aligned}
&G(\textbf{x}, \textbf{y}) = \textbf{z}, \\ 
&\textbf{z}^k = \sum_{i,j} G^{ijk} \textbf{x}^i \textbf{y}^j.
\end{aligned}
\end{equation}
Similarly, for $\textbf{x}_1 \in \mathbb{R}^{n_1}, \hdots \textbf{x}_d \in \mathbb{R}^{n_d}$, a multilinear unit $G \in \mathbb{R}^{n_1 \times n_2 \times \hdots \times n_d \times n_j}$ defines a multilinear map $G : \prod_{k=1}^{d} \mathbb{R}^{n_k} \to \mathbb{R}^{n_j}$ by the formula

\begin{equation}
\begin{aligned}
&G(\textbf{x}_1, \textbf{x}_2, \hdots ,\textbf{x}_d) = \textbf{z} \\ 
&\textbf{z}^{j} = \sum_{i_1, i_2, \hdots, i_d} G^{i_1 i_2 \hdots i_d j} \textbf{x}_1^{i_1}\textbf{x}_2^{i_2} \hdots \textbf{x}_d^{i_d}.
\end{aligned}
\end{equation}

In the rest of this section, we describe how to compute the score functions $l_y(X)$ (see~\cref{eq:data_rep}) for each class label $y$, which then could be fed into the loss function (such as cross-entropy).
The architecture we propose to implement the score functions is illustrated on \cref{fig:tt-network}.
For a vector
$\textbf{r} = (r_1, r_2, \hdots r_{d-1})$ of positive integers (rank hyperparameter) we define bilinear units 
$$G_k \in \mathbb{R}^{r_{k-1} \times m \times r_k},$$
with $r_0 = r_d = 1$. Note that because $r_0 = 1$, the first unit $G_1$ is in fact just a linear map, and because $r_d = 1$ the output of the network is just a number. On a step $k \geq 2$ the representation $f_{\theta}(\textbf{x}_k)$ and output of the unit $G_{k-1}$ of size $r_k$ are fed into the unit $G_k$. Thus we obtain a recurrent-type neural network with multiplicative connections and without non-linearities. 

To draw a connection with the Tensor Train decomposition we make the following observation. For each of the class labels $y$ let us construct the tensor $\mathcal{W}_{y}$ using the definition of TT-decomposition (\cref{eq:tt-decomp}) and taking $\{ G_k \}_{k=1}^d$ used for constructing $l_y(X)$ as its TT-cores. 
Using the definition of the \cref{eq:feature-tens} we find that the score functions computed by the network from \cref{fig:tt-network} are given by the formula
\begin{equation}
l_y(X) = \sum_{i_1, i_2, \hdots i_d} W_y^{i_1 i_2 \hdots i_d} \Phi(X)^{i_1 i_2 \hdots i_d},
\end{equation}
which is verified using \cref{eq:tt-decomp} and \cref{eq:feature-tens}. Thus, we can conclude that the network presented on \cref{fig:tt-network} realizes the TT-decomposition of the weight tensor. We also note that the size of the output of the bilinear unit $G_k$ in the TT-Network is equal to $r_k$, which means that the TT-ranks correspond to the \emph{width} of the network.

Let us now consider other tensor decompositions of the weight tensors $\mathcal{W}_y$, construct corresponding network architectures, and compare their properties with the original TT-Network. 
\begin{figure}[h!]
\centering
\begin{subfigure}[h]{0.4\textwidth}
  \includegraphics[width=1.0\textwidth]{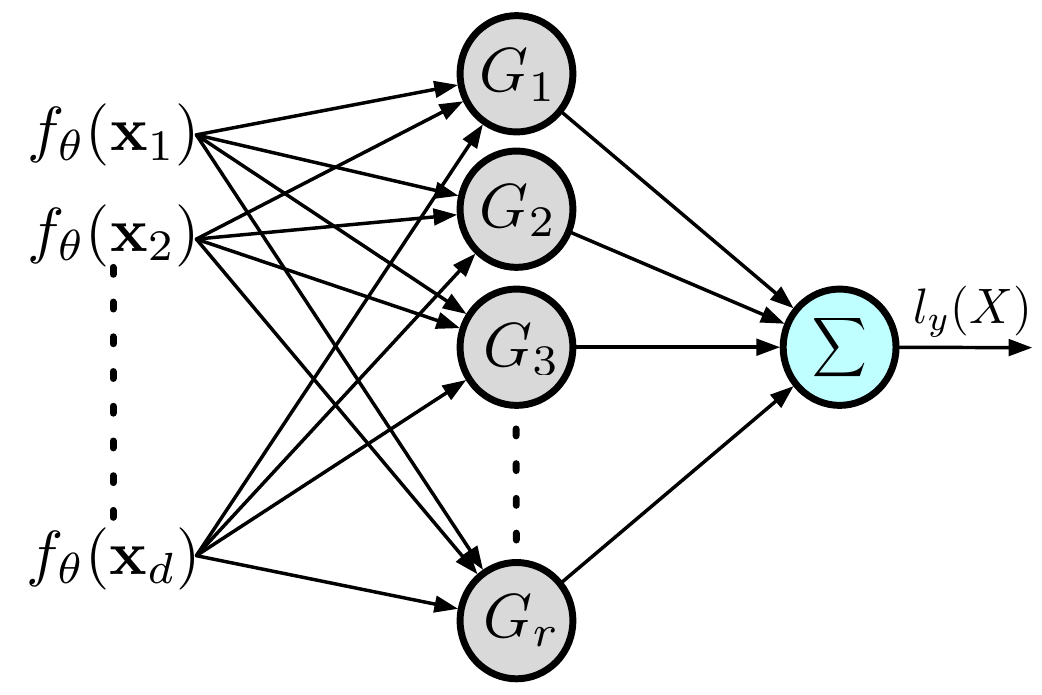}
 \caption{CP-Network}
 \label{fig:cp-net}
 \end{subfigure}
 \hspace{0.1\textwidth}
 \begin{subfigure}[h]{0.4\textwidth}
  \includegraphics[width=1.0\textwidth]{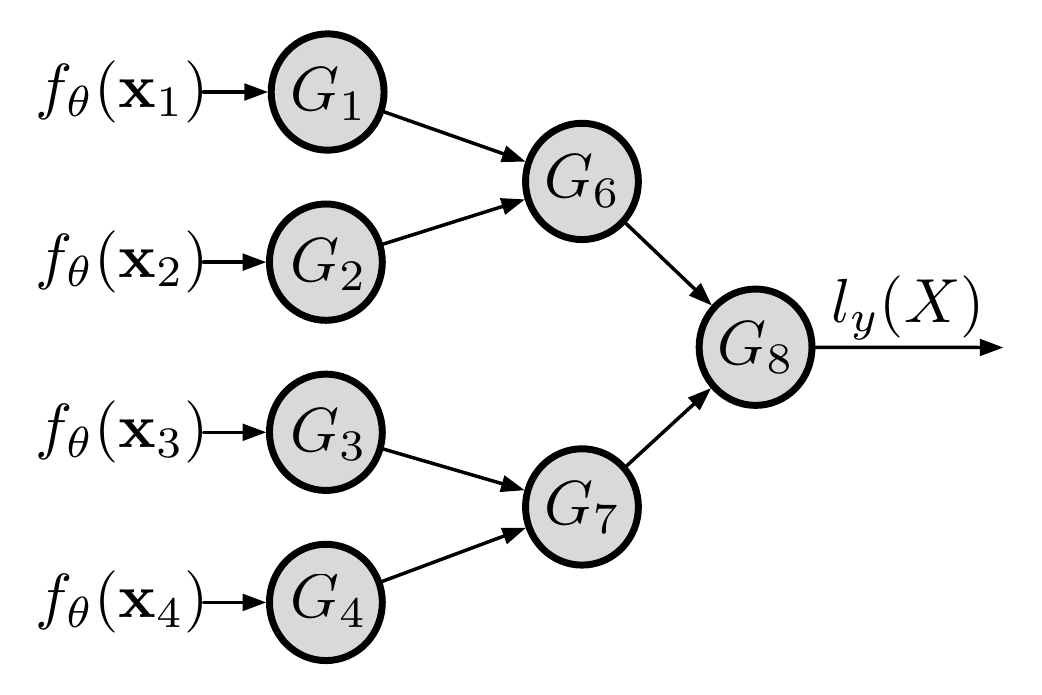}
 \caption{HT-Network}
 \label{fig:ht-net}
 \end{subfigure}
 \caption{Examples of networks corresponding to various tensor decompositions.}
  \label{fig:cp-and-ht}
\end{figure}

A network corresponding to the CP-decomposition is visualized on \cref{fig:cp-net}. Each multilinear unit $G_{\alpha}$ is given by a summand in the formula \cref{eq:candec}, namely
$$G_{\alpha}^{i_1 i_2 \hdots i_d} = \textbf{v}_{1, \alpha}^{i_1}\textbf{v}_{2, \alpha}^{i_2} \hdots 
 \textbf{v}_{d,\alpha}^{i_d}, \quad \alpha \in \{ 1, \hdots r \} .$$
Note that the output of each $G_{\alpha}$ in this case is just a number, and in total there are $\rank_{CP} \mathcal{W}_y$ multilinear units. Their outputs are then summed up by the $\Sigma$ node. As before rank of the decomposition corresponds to the width of the network. However, in this case the network is \emph{shallow}, meaning that there is only one hidden layer. 

On the \cref{fig:ht-net} a network of other kind is presented. Tensor decomposition which underlies it is the Hierarchical Tucker decomposition, and hence we call it the HT-Network. It is constructed using a binary tree, where each node other than leaf corresponds to a bilinear unit, and leaves correspond to linear units. Inputs are fed into leaves, and this data is passed along the tree to the root, which outputs a number. Ranks, in this case, are just the sizes of the outputs of the intermediate units. We will denote them by $\rank_{HT} \Xt$. These are networks considered in \cite{cohen2016expressive}, where the expressive power of such networks was analyzed and was argued that they resemble traditional CNNs. In general Hierarchical Tucker decomposition may be constructed using an arbitrary tree, but not much theory is known in general case.

Our main theoretical results are related to a comparison of the expressive power of these kinds of networks. Namely, the question that we ask is as follows. Suppose that we are given a TT-Network. How complex would be a CP- or HT-Network realizing the same score function? A natural measure of complexity, in this case, would be the rank of the corresponding tensor decomposition. To make transitioning between tensor decompositions and deep learning vocabulary easier, we introduce the following table.
\begin{table}[h!]
  \centering
  \caption{Correspondence between languages of Tensor Analysis and Deep Learning.}
  \label{tab:vocab}
  \begin{tabular}{cp{1cm}c}
    \hline
    \hline
    \\[-0.2cm]
    \textbf{Tensor Decompositions} & & \textbf{Deep Learning} \\
    
    CP-decomposition & &  shallow network\\
    TT-decomposition & & RNN\\
    HT-decomposition & & CNN\\
    rank of the decomposition & & width of the network \\
    \hline 
    \hline \\
  \end{tabular}
\end{table}

\section{Theoretical Analysis \label{sec:expressive_power_proof}}
In this section we prove the expressive power theorem for the Tensor Train decomposition, that is we prove that given a random $d$-dimensional tensor in the TT format with ranks $\textbf{r}$ and modes $n$, with probability $1$ this tensor will have exponentially large CP-rank. Note that the reverse result can not hold true since TT-ranks can not be larger than CP-ranks: $\rank_{TT} \Xt \leq \rank_{CP} \Xt.$

It is known that the problem of determining the exact CP-rank of a tensor is NP-hard. 

To bound CP-rank of a tensor the following lemma is useful.
\begin{lemma}\label{lemma:cp_rank}
Let $\Xind$ and
$\rank_{CP} \Xt = r.$
Then for any matricization $\Xm$ we have 
$\rank \Xm \leq r,$
where the ordinary matrix rank is assumed.
\end{lemma}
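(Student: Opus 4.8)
The plan is to exploit the fact that matricization is a \emph{linear} operation on tensors --- it merely transposes and reshapes the entries of $\Xt$ --- so it is enough to see what it does to a single rank-one summand of the canonical decomposition and then invoke subadditivity of matrix rank.

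First I would invoke the hypothesis to write
\[
\Xind = \sum_{\alpha=1}^{r} \textbf{v}_{1,\alpha}^{i_1} \textbf{v}_{2,\alpha}^{i_2} \hdots \textbf{v}_{d,\alpha}^{i_d},
\]
and, using linearity of the map $\Xt \mapsto \Xm$, conclude that $\Xm = \sum_{\alpha=1}^{r} \mathcal{R}_\alpha^{(s,t)}$, where $\mathcal{R}_\alpha$ is the rank-one tensor with entries $\textbf{v}_{1,\alpha}^{i_1}\hdots\textbf{v}_{d,\alpha}^{i_d}$ and $\mathcal{R}_\alpha^{(s,t)}$ is its matricization with respect to the same pair $(s,t)$.

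Next I would check that each $\mathcal{R}_\alpha^{(s,t)}$ has matrix rank at most $1$. With $s = \{ i_1, \hdots, i_{m_s} \}$ and $t = \{ j_1, \hdots, j_{d-m_s} \}$, the entry of $\mathcal{R}_\alpha^{(s,t)}$ indexed by the row multi-index $(i_1, \hdots, i_{m_s})$ and the column multi-index $(j_1, \hdots, j_{d-m_s})$ equals $\bigl(\prod_{k \in s} \textbf{v}_{k,\alpha}^{i_k}\bigr)\bigl(\prod_{k \in t} \textbf{v}_{k,\alpha}^{j_k}\bigr)$. Hence $\mathcal{R}_\alpha^{(s,t)} = \textbf{p}_\alpha \textbf{q}_\alpha^{\top}$, an outer product of the two vectors $\textbf{p}_\alpha = \bigotimes_{k \in s}\textbf{v}_{k,\alpha}$ and $\textbf{q}_\alpha = \bigotimes_{k \in t}\textbf{v}_{k,\alpha}$ (the Kronecker products taken in whatever order the reshape in the definition of matricization prescribes), so $\rank \mathcal{R}_\alpha^{(s,t)} \leq 1$.

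Finally, subadditivity of the matrix rank yields
\[
\rank \Xm = \rank\Bigl( \sum_{\alpha=1}^{r} \mathcal{R}_\alpha^{(s,t)} \Bigr) \leq \sum_{\alpha=1}^{r} \rank \mathcal{R}_\alpha^{(s,t)} \leq r,
\]
which is the desired bound. I do not anticipate a genuine obstacle: this is an elementary fact and the reasoning is essentially forced. The only place demanding a little care is the index bookkeeping in the middle step --- one must make sure the reshape defining $\Xm$ carries the Kronecker-structured tensor $\mathcal{R}_\alpha$ exactly to the outer product $\textbf{p}_\alpha\textbf{q}_\alpha^{\top}$ (rather than to some permutation of its rows and columns), which is immediate once the orderings of the row and column multi-indices are fixed consistently with the definition of matricization stated above.
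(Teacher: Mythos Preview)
Your proof is correct and follows essentially the same route as the paper's: show that the matricization of a single rank-one summand is an outer product (hence has matrix rank at most $1$), then combine linearity of matricization with subadditivity of matrix rank. The only difference is cosmetic---you spell out the vectors $\textbf{p}_\alpha,\textbf{q}_\alpha$ as Kronecker products, whereas the paper simply asserts that $\mathcal{A}^{(s,t)}$ can be written as $\textbf{u}\textbf{w}^{T}$ for some $\textbf{u},\textbf{w}$.
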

\begin{proof}
Proof is based on the following observation. Let
$$\mathcal{A}^{i_1 i_2 \hdots i_d} = \textbf{v}_{1}^{i_1} \textbf{v}_{2}^{i_2} \hdots \textbf{v}_{d}^{i_d},$$
be a CP-rank $1$ tensor. Note for any $s,t$
$$\rank \mathcal{A}^{(s, t)} = 1,$$
because $\mathcal{A}^{(s, t)}$ can be written as $\textbf{u}\textbf{w}^{T}$ for some $\textbf{u}$ and $\textbf{w}$. Then the statement of the lemma follows from the facts that matricization is a linear operation, and that for matrices
$$\rank (A + B) \leq \rank A + \rank B.$$
\end{proof}

We use this lemma to provide a lower bound on the CP-rank in the theorem formulated below. For example, suppose that we found some matricization of a tensor $\Xt$ which has matrix rank $r$. Then, by using the lemma we can estimate that
$\rank_{CP} \Xt \geq r.$

Let us denote $\textbf{n} = (n_1, n_2 \hdots n_d)$. Set of all tensors $\Xt$ with mode sizes $\textbf{n}$ representable in TT-format with 
$$\rank_{TT} \Xt \leq \textbf{r},$$
for some vector of positive integers $\textbf{r}$ (inequality is understood entry-wise) forms an \textit{irreducible algebraic variety} (\cite{shafarevich1994basic}), which we denote by $\TTr$. This means that $\TTr$ is defined by a set of polynomial equations in $\mathbb{R}^{n_1 \times n_2 \hdots n_d}$, and that it can not be written as a union (not necessarily disjoint) of two proper non-empty algebraic subsets. An example where the latter property does not hold would be the union of axes $x=0$ and $y=0$ in $\mathbb{R}^2$, which is an algebraic set defined by the equation $xy=0$. The main fact that we use about irreducible algebraic varieties is that any \emph{proper} algebraic subset of them necessarily has measure $0$ (\cite{ ilyashenko2008lectures}).
 
For simplicity let us assume that number of modes $d$ is even, that all mode sizes are equal to $n$, and we consider $\TTr$ with $\textbf{r} = (r, r \hdots r)$, so for any $\Xt \in \TTr$ we have 
$$\rank_{TT} \Xt \leq (r, r, \hdots ,r),$$
entry-wise.

As the main result we prove the following theorem
\begin{theorem}\label{thm:main_thm}
Suppose that $d=2k$ is even. Define the following set
$$
B = \lbrace \Xt \in \TTr : \rank_{CP} \Xt < q^{\frac{d}{2}}  \rbrace 
,$$
where $q = \min \{n, r\}$.

Then 
$$
\mu(B) = 0,
$$
where $\mu$ is the standard Lebesgue measure on $\TTr$.
\end{theorem}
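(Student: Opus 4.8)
The plan is to reduce the whole statement to a single explicit rank computation. By \cref{lemma:cp_rank}, if we can exhibit \emph{one} matricization $\mathcal{X}^{(s,t)}$ whose ordinary matrix rank is at least $q^{d/2}$, then $\rank_{CP}\mathcal{X} \ge q^{d/2}$, so $\mathcal{X} \notin B$. The natural choice of split is the balanced ``interleaved'' one, $s = \{1,3,5,\dots,d-1\}$ and $t = \{2,4,\dots,d\}$, because for the Tensor Train format this matricization has a clean block structure in terms of the cores. Concretely, I would show that for $\mathcal{X} = \TTr$ built from cores $G_1,\dots,G_d$, the matrix $\mathcal{X}^{(s,t)}$ factors through the Kronecker-type products of the odd cores on one side and the even cores on the other, so that generically its rank equals $\min\{n,r\}^{d/2} = q^{d/2}$. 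Since the $\TTr$ with $\mathbf{r}=(r,\dots,r)$ and modes $n$ certainly contains tensors attaining this rank (take, e.g., all cores to have full-rank ``slices'' — an explicit small construction suffices), the function $\mathcal{X} \mapsto \rank \mathcal{X}^{(s,t)}$ is not identically less than $q^{d/2}$ on $\TTr$.

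The second ingredient is the algebraic-geometry fact quoted in the text: $\TTr$ is an irreducible algebraic variety, and any proper algebraic subset of it has Lebesgue measure zero. The set where a fixed matricization has rank $< q^{d/2}$ is cut out inside $\TTr$ by the vanishing of all $q^{d/2}\times q^{d/2}$ minors of $\mathcal{X}^{(s,t)}$ — these are polynomial equations in the entries of $\mathcal{X}$, hence define a (closed) algebraic subset $Z \subseteq \TTr$. By the previous paragraph $Z \ne \TTr$, so $Z$ is a \emph{proper} algebraic subset, whence $\mu(Z) = 0$. Finally $B \subseteq \{\mathcal{X}\in\TTr : \rank \mathcal{X}^{(s,t)} < q^{d/2}\} = Z$ (again by \cref{lemma:cp_rank}, contrapositively), so $\mu(B) \le \mu(Z) = 0$.

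Assembling the steps: (i) fix the interleaved split $(s,t)$; (ii) write $\mathcal{X}^{(s,t)}$ explicitly in terms of the TT-cores and identify its generic rank as $q^{d/2}$; (iii) exhibit one tensor in $\TTr$ realizing this rank, to conclude the minor-vanishing locus $Z$ is proper; (iv) invoke irreducibility of $\TTr$ to get $\mu(Z)=0$; (v) use \cref{lemma:cp_rank} to sandwich $B \subseteq Z$.

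The main obstacle is step (ii): carefully expressing the matricization of a TT tensor under the interleaved permutation and proving that its rank is \emph{generically} exactly $q^{d/2}$ (both the upper bound — which also follows from \cref{lemma:cp_rank} applied in reverse via the TT structure — and, crucially, a matching lower bound for at least one choice of cores). One has to be attentive that the odd/even cores are coupled through the summation indices $\alpha_k$, so the factorization is not a plain Kronecker product but a product of ``transfer'' matrices; showing a generic choice of cores makes the relevant product of slice-matrices full rank $q^{d/2}$ is the technical heart of the argument. Everything else — the minor/measure-zero argument and the final inclusion — is routine given the irreducibility of $\TTr$ stated above.
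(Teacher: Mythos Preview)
Your proposal is correct and follows essentially the same route as the paper: the interleaved matricization $s=\{1,3,\dots,d-1\}$, $t=\{2,4,\dots,d\}$, the inclusion $B\subseteq Z$ via \cref{lemma:cp_rank}, and the irreducibility/measure-zero argument are exactly what the paper does. The only difference is that you frame step~(ii) as computing the \emph{generic} rank of $\mathcal{X}^{(s,t)}$, which is more than necessary; the paper simply writes down explicit Kronecker-delta cores $G_k^{\alpha_{k-1} i_k \alpha_k}=\delta_{i_k\alpha_{k-1}}$ or $\delta_{i_k\alpha_k}$ (alternating) so that $\mathcal{X}^{(s,t)}$ contains a $q^{d/2}\times q^{d/2}$ identity submatrix, which immediately gives the single witness you need in step~(iii).
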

\begin{proof}
% By \cref{lemma:tt_to_tuck} we find that for each $X \in \TTr$ there exists Tucker decomposition with core $G$ and 
% $$\rank_{\mathbb{T}} G = \underbrace{(r, r^2, \hdots r^2, r)}_{d}.$$
% We would like to find a matricization of $G$ with rank generically equal to $r^{d-1}$. Then the statement of the theorem would immediately follow from \cref{lemma:tuck_core_tt,lemma:cp_rank}. Let us define
% $$t = \lbrace 1, 3, \hdots d-1 \rbrace, s = \lbrace 2, 4, \hdots d \rbrace, $$
% and consider matricization $G^{(t, s)} \in \mathbb{R}^{r^{d-1} \times r^{d-1}}.$ 
% Clearly, 
% $$\rank G^{(t, s)} \leq r^{d-1},$$ and the condition that
% $$\rank G^{(t, s)} \leq r^{d-1}-1$$ is algebraic (i.e. defined by polynomial equations on $\TTr$).
% \\
Our proof is based on applying \cref{lemma:cp_rank} to a particular matricization of $\Xt$. Namely, we would like to show
that for $s = \lbrace 1, 3, \hdots d-1 \rbrace$, $t = \lbrace 2, 4, \hdots d \rbrace$ the following set
$$B^{(s, t)} = \lbrace \Xt \in \TTr : \rank \Xm \leq q^{\frac{d}{2}} - 1 \rbrace,$$
has measure $0$. 
Indeed, by \cref{lemma:cp_rank} we have
$$B \subset B^{(s, t)},$$
so if $\mu(B^{(s, t)}) = 0$ then $\mu(B)=0$ as well. Note that $B^{(s, t)}$ is an algebraic subset of $\mathcal{M}_{\textbf{r}}$ given by the conditions that the determinants of all $q^{\frac{d}{2}} \times q^{\frac{d}{2}}$ submatrices of $\mathcal{X}^{(s,t)}$ are equal to $0$. 
Thus to show that $\mu(B^{(s, t)}) = 0$ we need to find at least one $\Xt$ such that $\rank \Xm \geq q^{\frac{d}{2}}$. This follows from the fact that because $B^{(s, t)}$ is an algebraic subset of the irreducible algebraic variety $\TTr$, it is either equal to $\TTr$ or has measure $0$, as was explained before. \\
One way to construct such tensor is as follows. Let us define the following tensors:
\begin{equation}
\begin{aligned}
\label{eq:example_tt_cores}
&G_{1}^{i_1\alpha_1} = \delta_{i_1 \alpha_1}, ~~ G_{1} \in \mathbb{R}^{1 \times n \times r}\\
&G_{k}^{\alpha_{k-1} i_k \alpha_k} = \delta_{i_k \alpha_{k-1}}, ~~G_{k} \in \mathbb{R}^{r \times n \times 1}, k = 2, 4, 6, \ldots, d-2\\
&G_{k}^{\alpha_{k-1} i_k \alpha_k} = \delta_{i_k \alpha_k}, ~~ G_{k} \in \mathbb{R}^{1 \times n \times r}, k = 3, 5, 7, \ldots, d-1\\
&G_{d}^{\alpha_{d-1} i_d} = \delta_{i_d \alpha_{d-1}}, ~~G_{d} \in \mathbb{R}^{r \times n \times 1} 
\end{aligned}
\end{equation}
where $\delta_{i \alpha}$ is the Kronecker delta symbol:
$$
\delta_{i \alpha} = 
\begin{cases}
            1, &         \text{if } i=\alpha,\\
            0, &         \text{if } i\neq \alpha.
    \end{cases}
$$

The TT-ranks of the tensor $\Xt$ defined by the TT-cores~\eqref{eq:example_tt_cores} are equal to $\rank_{TT} \Xt = (r, 1, r, \ldots,r, 1, r).$

Lets consider the following matricization of the tensor $\Xt$
$$
\Xt^{(i_1, i_3, \ldots, i_{d-1}), (i_2, i_4, \ldots, i_d)}
$$

The following identity holds true for any values of indices such that $i_k =  1, \ldots, q , ~k= 1, \ldots, d .$
\begin{equation}
\begin{aligned}
&\Xt^{(i_1, i_3, \ldots, i_{d-1}), (i_2, i_4, \ldots, i_d)}
= \sum_{\alpha_1, \ldots, \alpha_{d-1}} G_{1}^{i_1\alpha_1} \ldots G_{d}^{\alpha_{d-1} i_d} =\\
&\sum_{\alpha_1, \ldots, \alpha_{d-1}} \delta_{i_1 \alpha_1} \delta_{i_2 \alpha_1} \delta_{i_3 \alpha_3} \ldots \delta_{i_d, \alpha_{d-1}} = \delta_{i_1 i_2} \delta_{i_3 i_4} \ldots \delta_{i_{d-1} i_d}
\end{aligned}
\end{equation}
The last equality holds because $\sum_{\alpha_k=1}^r \delta_{i_k \alpha_k} \delta_{i_{k+1} \alpha_k} = \delta_{i_k i_{k+1}}$ for any $i_k =  1, \ldots, q $.
We obtain that
\begin{equation}
\begin{aligned}
&\Xt^{(i_1, i_3, \ldots, i_{d-1}), (i_2, i_4, \ldots, i_d)}
= \delta_{i_1 i_2} \delta_{i_3 i_4} \ldots \delta_{i_{d-1} i_d} = I^{(i_1, i_3, \ldots, i_{d-1}), (i_2, i_4, \ldots, i_d)},
\end{aligned}
\end{equation}
where $I$ is the identity matrix of size $q^{d/2} \times q^{d/2}$ where $q = \min\{n, r\}$.

To summarize, we found an example of a tensor $\Xt$ such that $\rank_{TT} \Xt \leq \textbf{r}$ and the matricization $\Xt^{(i_1, i_3, \ldots, i_{d-1}), (i_2, i_4, \ldots, i_d)}$ has a submatrix being equal to the identity matrix of size $q^{d/2} \times q^{d/2}$, and hence $\rank \Xt^{(i_1, i_3, \ldots, i_{d-1}), (i_2, i_4, \ldots, i_d)} \geq q^{d/2}$.

This means that the canonical $\rank_{CP} \Xt \geq q^{d/2}$ which concludes the proof.
\end{proof}
In other words, we have proved that for all TT-Networks besides negligible set, the equivalent CP-Network will have exponentially large width. 
To compare the expressive powers of the HT- and TT-Networks we use the following theorem \cite[Section 5.3.2]{grasedyck2010hierarchical}. 
\begin{theorem}
For any tensor $\Xt$ the following estimates hold.
\begin{itemize}
\item If $\rank_{TT} \Xt \leq r$, then 
$\rank_{HT} \Xt \leq r^2.$
\item If $\rank_{HT} \Xt \leq r$, then $\rank_{TT} \Xt \leq r^{\sfrac{\log_2 (d)}{2}}.$
\end{itemize}
\end{theorem}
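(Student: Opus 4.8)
The plan is to reduce both inequalities to a single submultiplicativity property of matricization ranks, combined with elementary binary-tree combinatorics. Throughout I work with the \emph{canonical} balanced binary dimension tree, whose nodes are exactly the contiguous intervals $\{a,a+1,\dots,b\}\subseteq\{1,\dots,d\}$ obtained by repeatedly splitting an interval into two (nearly) equal halves. For this tree the HT-rank at a node $t$ equals $\rank\Xt^{(t,\bar t)}$, where $\bar t$ is the complementary set of modes, and the TT-ranks are $r_k=\rank\Xt^{(\{1,\dots,k\},\{k+1,\dots,d\})}$. The alignment of the tree with the mode order is essential here: for a non-aligned tree the first estimate fails outright, as the odd/even matricization of \cref{thm:main_thm} demonstrates.

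The technical core is the following lemma: if $s=s_1\sqcup s_2$ is a disjoint union of mode subsets, then
$$\rank\Xt^{(s,\bar s)}\le\rank\Xt^{(s_1,\overline{s_1})}\cdot\rank\Xt^{(s_2,\overline{s_2})}.$$
I would prove it through nestedness of column spaces. Let $U_i$ be the column space of $\Xt^{(s_i,\overline{s_i})}$, a subspace of $V_i:=\bigotimes_{j\in s_i}\mathbb{R}^{n_j}$ with $\dim U_i=\rank\Xt^{(s_i,\overline{s_i})}$. Every column of $\Xt^{(s,\bar s)}$ — that is, the slice of $\Xt$ over the modes $s$ with the modes $\bar s$ fixed — has each of its mode-$s_1$ fibers equal to a column of $\Xt^{(s_1,\overline{s_1})}$, hence lying in $U_1$, and symmetrically each mode-$s_2$ fiber lies in $U_2$; therefore the column lies in $(U_1\otimes V_2)\cap(V_1\otimes U_2)=U_1\otimes U_2$. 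Thus the column space of $\Xt^{(s,\bar s)}$ sits inside $U_1\otimes U_2$, whose dimension is $\dim U_1\cdot\dim U_2$; iterating handles a disjoint union of any number of pieces. I expect the subspace identity $(U_1\otimes V_2)\cap(V_1\otimes U_2)=U_1\otimes U_2$ — routine, but needing a basis argument — together with making the fiber statement precise, to be the one genuinely fiddly point of the whole proof.

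Granting the lemma, the first estimate is immediate. A node $t=\{a,\dots,b\}$ of the canonical tree has $\bar t=\{1,\dots,a-1\}\sqcup\{b+1,\dots,d\}$, so using the lemma and the fact that matricization rank is invariant under transposing the two mode groups,
$$\rank\Xt^{(t,\bar t)}=\rank\Xt^{(\bar t,t)}\le\rank\Xt^{(\{1,\dots,a-1\},\cdot)}\cdot\rank\Xt^{(\{b+1,\dots,d\},\cdot)}=r_{a-1}\,r_b\le r^2,$$
where $\rank\Xt^{(\{b+1,\dots,d\},\cdot)}=\rank\Xt^{(\{1,\dots,b\},\cdot)}=r_b$ again by transposition, and the same computation (with $a=b$) covers the leaves. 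Taking the maximum over the nodes of the tree gives $\rank_{HT}\Xt\le r^2$.

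For the second estimate, fix $k$ and look at the TT matricization along the prefix $\{1,\dots,k\}$. By the standard canonical segment decomposition, $\{1,\dots,k\}$ is a disjoint union of nodes of the balanced tree (one dyadic block per one-bit of $k$), and so is the suffix $\{k+1,\dots,d\}$; moreover the two decompositions together use at most $\log_2 d$ nodes (up to the usual rounding term), so whichever of $\{1,\dots,k\}$ and $\{k+1,\dots,d\}$ is cheaper is a disjoint union of at most $\tfrac12\log_2 d$ tree nodes. Applying the lemma to that cheaper set, using $\rank\Xt^{(\{1,\dots,k\},\cdot)}=\rank\Xt^{(\{k+1,\dots,d\},\cdot)}$, and bounding each node's contribution by $r$, we obtain $r_k\le r^{\frac12\log_2 d}$ for every $k$, i.e. $\rank_{TT}\Xt\le r^{\log_2(d)/2}$. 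The only delicate point here is the exact count in the segment decomposition when $d$ is not a power of two, which affects the exponent only through ceilings and is handled exactly as in \cite{grasedyck2010hierarchical}.
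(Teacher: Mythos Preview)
The paper does not prove this theorem at all: it is quoted from \cite[Section~5.3.2]{grasedyck2010hierarchical} without argument, so there is no in-paper proof to compare against. The relevant benchmark is Grasedyck's original argument, and your sketch is essentially that argument. The submultiplicativity lemma --- that the matricization rank over a disjoint union $s=s_1\sqcup s_2$ is bounded by the product of the ranks over $s_1$ and $s_2$, via nestedness of column spaces in $U_1\otimes U_2$ --- is exactly the mechanism used there, and your derivation of the first bound (writing the complement of a contiguous interval $\{a,\dots,b\}$ as the disjoint union of the prefix $\{1,\dots,a-1\}$ and the suffix $\{b+1,\dots,d\}$) is correct as written.

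For the second bound your strategy is again the right one, but the specific count is off by one. For $d=2^L$ and a cut at position $k$, the canonical decompositions of $\{1,\dots,k\}$ and $\{k+1,\dots,d\}$ into tree nodes use in total $\ell+2$ blocks, where $\ell$ is the depth of the lowest common ancestor of the leaves $k$ and $k+1$; this ranges up to $L+1=\log_2 d+1$, not $\log_2 d$. (Concretely, $d=8$, $k=5$ gives $\{1,2,3,4\}\cup\{5\}$ for the prefix and $\{6\}\cup\{7,8\}$ for the suffix, $2+2=4>3$.) Consequently the exponent your argument actually delivers is $\lceil(\log_2 d+1)/2\rceil$, an additive $+1$ away from the stated $\tfrac12\log_2 d$. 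This is precisely the ``rounding term'' you already flag and defer to \cite{grasedyck2010hierarchical}; Grasedyck's statement carries the corresponding ceiling, and the paper here simply suppresses it. With that caveat your proposal is sound.
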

It is also known that this bounds are \emph{sharp} (see \cite{buczynska2015hackbusch}).
Thus, we can summarize all the results in the following \cref{tab:comp}.
\begin{table}[h!]
  \centering
  \caption{Comparison of the expressive power of various networks. Given a network of width $r$, specified in a column, rows correspond to the upper bound on the width of the equivalent network of other type (we assume that the number of feature maps $m$ is greater than the width of the network $r$).}
  \label{tab:vocab}
  \begin{tabular}{cccc}
  \hline  \hline \\[-0.2cm]
      & \textbf{TT-Network}  & \textbf{HT-Network}  & \textbf{CP-Network}\\ 
  \hline
    TT-Network & $r$ & $r^{\sfrac{\log_2 (d)}{2}}$ & $r$\\ 
    HT-Network & $r^2$ & $r$ & $r$\\ 
    CP-Network & $\geq r^{\frac{d}{2}}$ & $\geq r^{\frac{d}{2}}$ & $r$\\
     \hline
\hline \\
  \end{tabular}
  \label{tab:comp}
\end{table}
\paragraph{Example that requires exponential width in a shallow network}
A particular example used to prove Theorem~\ref{thm:main_thm} is not important per se since the Theorem states that TT is exponentially more expressive than CP for almost any tensor (for a set of tensors of measure one). However, to illustrate how the Theorem translates into  neural networks consider the following example.

Consider the task of getting $d$ input vectors with $n$ elements each and aiming to compute the following measure of similarity between $\textbf{x}_1, \ldots, \textbf{x}_{d/2}$ and $\textbf{x}_{d/2+1}, \ldots, \textbf{x}_d$:
\begin{equation}
\label{eq:nn-counterexample}
l(X) = (\textbf{x}_1^\intercal \textbf{x}_{d/2 + 1}) \ldots (\textbf{x}_{d/2}^\intercal \textbf{x}_{d})
\end{equation}

We argue that it can be done with a TT-Network of width $n$ by using the TT-tensor $\mathcal{X}$ defined in the proof of Theorem~\ref{thm:main_thm} and feeding the input vectors in the following order: $\textbf{x}_1, \textbf{x}_{d/2 + 1}, \ldots \textbf{x}_{d/2}, \textbf{x}_{d}$. The CP-network representing the same function will have $n^{d/2}$ terms (and hence $n^{d/2}$ width) and will correspond to expanding brackets in the expression~\eqref{eq:nn-counterexample}.
\paragraph{The case of equal TT-cores}
In analogy to the traditional RNNs we can consider a special class of Tensor Trains with the property that all the intermediate TT-cores are equal to each other: $G_2 = G_3 = \cdots = G_{d-1}$, which allows for processing sequences of varied length. We hypothesize that for this class exactly the same result as in Theorem~\ref{thm:main_thm} holds i.e. if we denote the variety of Tensor Trains with equal TT-cores by $\mathcal{M}_{\textbf{r}}^{eq}$, we believe that the following hypothesis holds true:
\begin{hypothesis}\label{hyp:eq-rank}
Theorem~\ref{thm:main_thm} is also valid if $\mathcal{M}_{\textbf{r}}$ is replaced by $\mathcal{M}_{\textbf{r}}^{eq}$.
\end{hypothesis}
To prove it we can follow the same route as in the proof of Theorem~\ref{thm:main_thm}. While we leave finding an analytical example of a tensor with the desired property of rank maximality to a future work, we have verified numerically that randomly generated tensors $\mathcal{X}$ from $\mathcal{M}_{\textbf{r}}^{eq}$ with $d=6$, $n$ ranging from $2$ to $10$ and $r$ ranging from $2$ to $20$ (we have checked $1000$ examples for each possible combination) indeed satisfy  $\rank_{CP} \mathcal{X} \geq q^{\frac{d}{2}}$.
\section{Experiments}
In this section, we experimentally check if indeed -- as suggested by \cref{thm:main_thm} -- the CP-Networks require exponentially larger width compared to the TT-Networks to fit a dataset to the same level of accuracy. This is not clear from the theorem since for natural data, functions that fit this data may lay in the neglectable set where the ranks of the TT- and CP-networks are related via a polynomial function (in contrast to the exponential relationship for all function outside the neglectable set). Other possible reasons why the theory may be disconnected with practice are optimization issues (although a certain low-rank tensor exists, we may fail to find it with SGD) and the existence of the feature maps, which were not taken into account in the theory.

To train the TT- and CP-Networks, we implemented them in \texttt{TensorFlow} (\cite{tensorflow2015-whitepaper}) and used Adam optimizer with batch size $32$ and learning rate sweeping across $\{\text{4e-3, 2e-3, 1e-3, 5e-4}\}$ values. Since we are focused on assessing the expressivity of the format (in contrast to its sensitivity to hyperparameters), we always choose the best performing run according to the training loss. 

For the first experiment, we generate two-dimensional datasets with \texttt{Scikit-learn} tools `moons` and `circles` (\cite{scikit-learn}) and for each training example feed the two features as two patches into the TT-Network~(see~\cref{fig:2d_ex}). This example shows that the TT-Networks can implement nontrivial decision boundaries.

\begin{figure}
\centering
\begin{subfigure}[h]{0.4\textwidth}
  \includegraphics[width=1.0\textwidth]{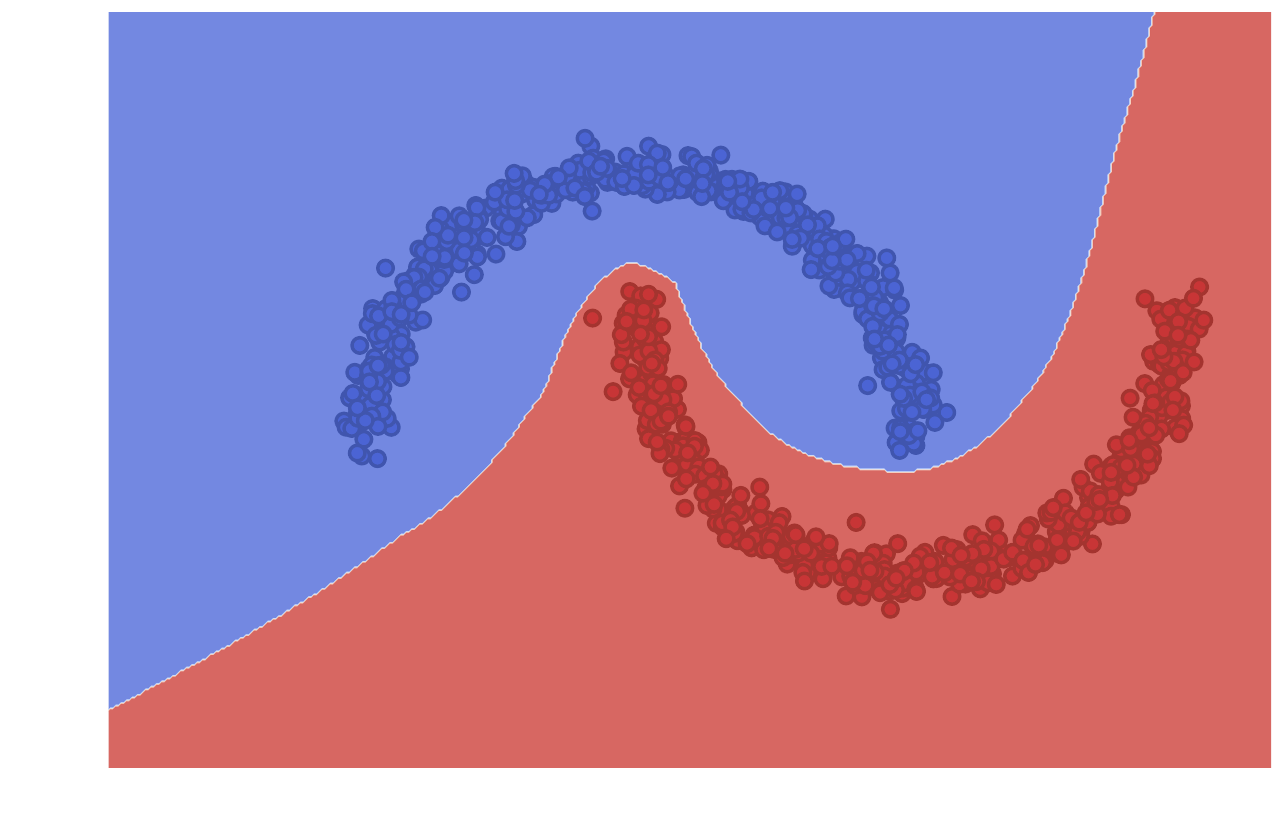}
 \end{subfigure}
 \hspace{0.1\textwidth}
 \begin{subfigure}[h]{0.4\textwidth}
  \includegraphics[width=1.0\textwidth]{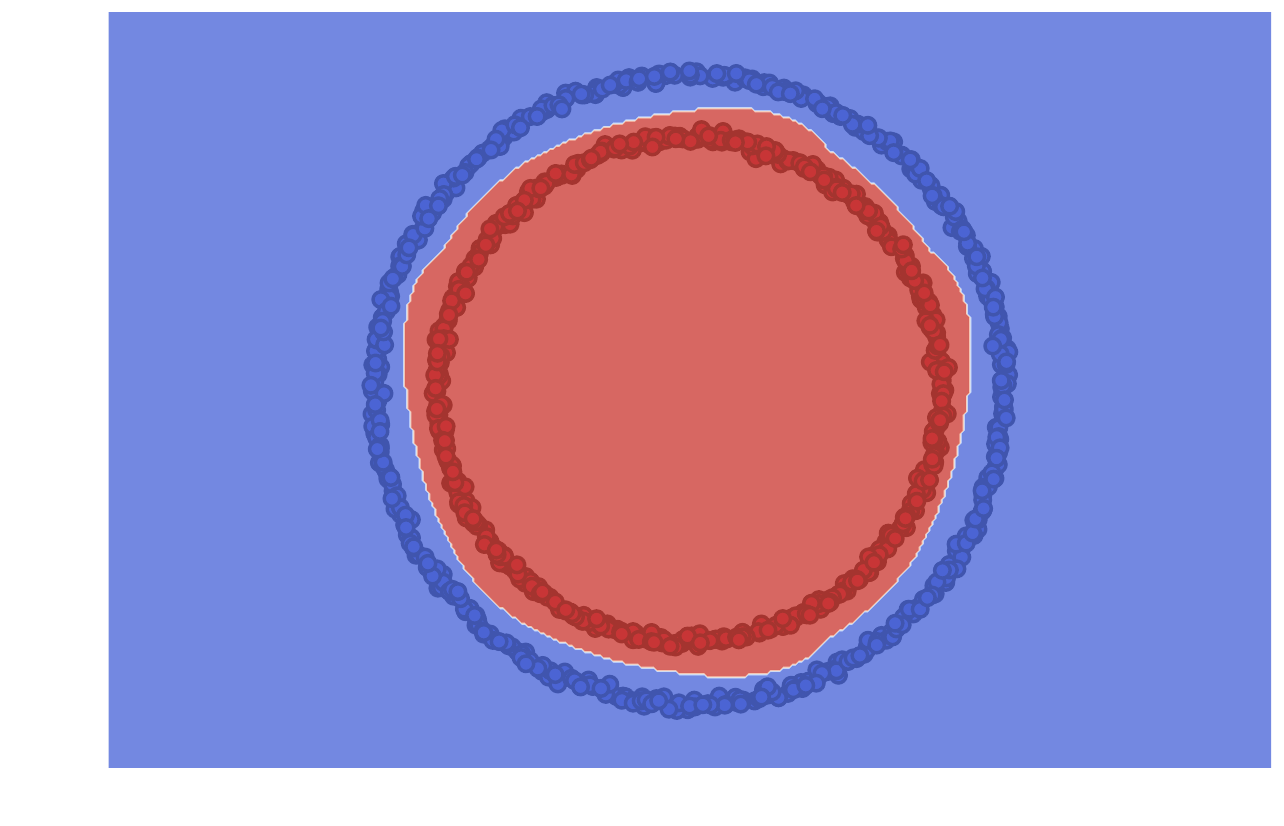}
 \end{subfigure}
\caption{Decision boundaries of the TT-Network on toy $2$-D datasets.}
\label{fig:2d_ex}
\end{figure}
For the next experiments, we use computer vision datasets MNIST (\cite{lecun1990handwritten}) and CIFAR-10 (\cite{krizhevsky2009learning}). MNIST is a collection of $70000$ handwritten digits, CIFAR-10 is a dataset of $60000$ natural images which are to be classified into $10$ classes such as bird or cat. We feed raw pixel data into the TT- and CP-Networks (which extract patches and apply a trainable feature map to them, see~\cref{sec:dl_tn}). In our experiments we choose patch size to be $8 \times 8$, feature maps to be affine maps followed by the ReLU activation and we set number of such feature maps to $4$. For MNIST, both TT- and CP-Networks show reasonable performance ($1.0$ train accuracy, $0.95$ test accuracy without regularizers, and $0.98$ test accuracy with dropout~$0.8$ applied to each patch) even with ranks less than $5$, which may indicate that the dataset is too simple to draw any conclusion, but serves as a sanity check.

We report the training accuracy for CIFAR-10 on \cref{fig:cifar_tests}. Note that we did not use regularizers of any sort for this experiment since we wanted to compare expressive power of networks (the best test accuracy we achieved this way on CIFAR-10 is $0.45$ for the TT-Network and $0.2$ for the CP-Network). On practice, the expressive power of the TT-Network is only polynomially better than that of the CP-network~(\cref{fig:cifar_tests}), probably because of the reasons discussed above.

\begin{figure}
\centering
% \begin{subfigure}[h]{0.4\textwidth}
%   \includegraphics[width=1.0\textwidth]{}
%  \end{subfigure}
%  \hspace{0.1\textwidth}
%  \begin{subfigure}[h]{0.38\textwidth}
%   \includegraphics[width=1.0\textwidth]{}
%  \end{subfigure}
\includegraphics[width=.9\textwidth]{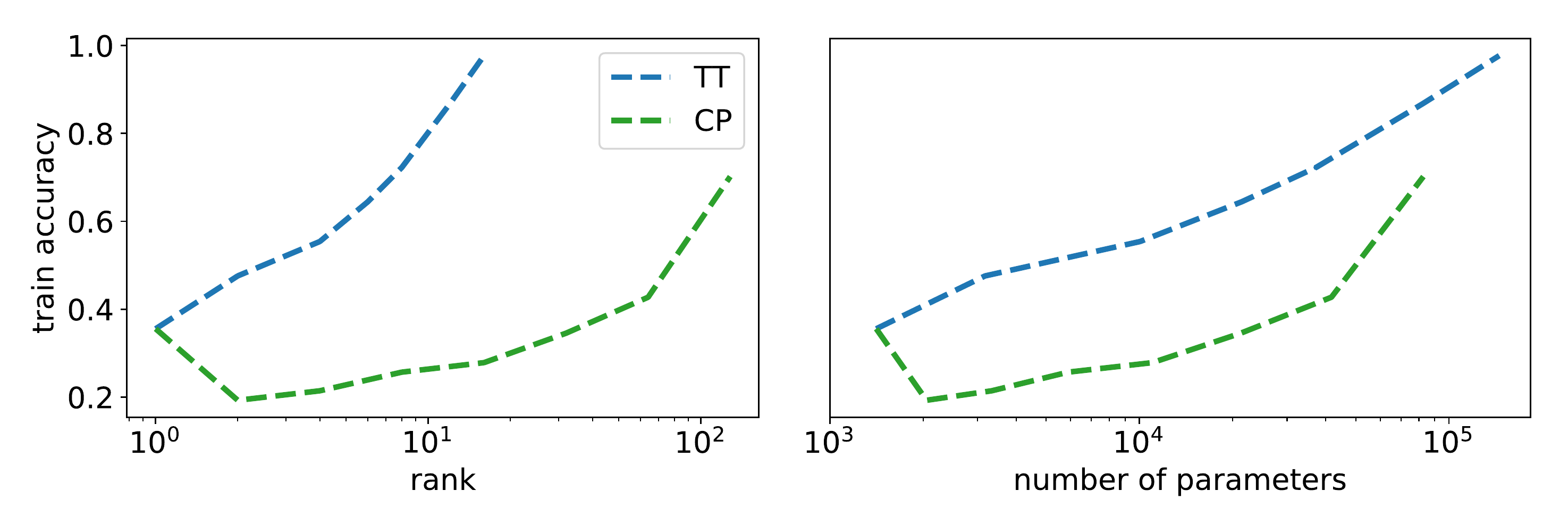}

\caption{Train accuracy on CIFAR-10 for the TT- and CP-Networks wrt rank of the decomposition and total number of parameters (feature size $4$ was used). Note that with rank increase the CP-Networks sometimes perform worse due to optimization issues.}
\label{fig:cifar_tests}
\end{figure}

\section{Related work}
A large body of work is devoted to analyzing the theoretical properties of neural networks~(\cite{cybenko1989approximation,hornik1989multilayer, shwartz2017opening}). 
Recent studies focus on depth efficiency~(\cite{raghu2016expressive, montufar2014number, eldan2016power, sutskever2013importance}), in most cases providing worst-case guaranties such as bounds between deep and shallow networks width. Two works are especially relevant since they analyze depth efficiency from the viewpoint of tensor decompositions: expressive power of the Hierarchical Tucker decomposition~(\cite{cohen2016expressive}) and its generalization to handle activation functions such as ReLU~(\cite{cohen2016convolutional}). However, all of the works above focus on feedforward networks, while we tackle recurrent architectures. The only other work that tackles expressivity of RNNs is the concurrent work that applies the TT-decomposition to explicitly modeling high-order interactions of the previous hidden states and analyses the expressive power of the resulting architecture~\citep{yu2017long}. This work, although very related to ours, analyses a different class of recurrent models.
%  The power of depth for feedforward neural networks
% On the number of linear regions of deep neural networks
% On the Expressive Power of Deep Neural Network
% . On the representational efficiency of restricted boltzmann machines
% Convolutional Rectifier Networks as Generalized Tensor Decompositions
% Opening the black box of Deep Neural Networks via Information

Models similar to the TT-Network were proposed in the literature but were considered from the practical point of view in contrast to the theoretical analyses provided in this paper.
\cite{novikov2016exponential, NIPS2016_6211} proposed a model that implements~\cref{eq:weights-features-dot}, but with a predefined (not learnable) feature map $\Phi$. \cite{wu2016multiplicative} explored recurrent neural networks with multiplicative connections, which can be interpreted as the TT-Networks with bilinear maps that are shared $G_k = G$ and have low-rank structure imposed on them. %\cite{yu2017long} have applied the TT-decomposition to model high order interactions of the hidden states in traditional RNNs in order to achieve long-term temporal forecasting in non-linear dynamics.

\section{Conclusion}
In this paper, we explored the connection between recurrent neural networks and Tensor Train decomposition and used it to prove the expressive power theorem, which states that a shallow network of exponentially large width is required to mimic a recurrent neural network. The downsides of this approach is that it provides worst-case analysis and do not take optimization issues into account. In the future work, we would like to address the optimization issues by exploiting the Riemannian geometry properties of the set of TT-tensors of fixed rank and extend the analysis to networks with non-linearity functions inside the recurrent connections (as was done for CNNs in~\cite{cohen2016convolutional}).
\section*{Acknowledgements}
This study was supported by the Ministry of Education
and Science of the Russian Federation (grant
14.756.31.0001).
\bibliography{iclr2018_conference}

\begin{thebibliography}{34}
\providecommand{\natexlab}[1]{#1}
\providecommand{\url}[1]{\texttt{#1}}
\expandafter\ifx\csname urlstyle\endcsname\relax
  \providecommand{\doi}[1]{doi: #1}\else
  \providecommand{\doi}{doi: \begingroup \urlstyle{rm}\Url}\fi

\bibitem[Abadi et~al.(2015)Abadi, Agarwal, Barham, Brevdo, Chen, Citro,
  Corrado, Davis, Dean, Devin, Ghemawat, Goodfellow, Harp, Irving, Isard, Jia,
  Jozefowicz, Kaiser, Kudlur, Levenberg, Man\'{e}, Monga, Moore, Murray, Olah,
  Schuster, Shlens, Steiner, Sutskever, Talwar, Tucker, Vanhoucke, Vasudevan,
  Vi\'{e}gas, Vinyals, Warden, Wattenberg, Wicke, Yu, and
  Zheng]{tensorflow2015-whitepaper}
Mart\'{\i}n Abadi, Ashish Agarwal, Paul Barham, Eugene Brevdo, Zhifeng Chen,
  Craig Citro, Greg~S. Corrado, Andy Davis, Jeffrey Dean, Matthieu Devin,
  Sanjay Ghemawat, Ian Goodfellow, Andrew Harp, Geoffrey Irving, Michael Isard,
  Yangqing Jia, Rafal Jozefowicz, Lukasz Kaiser, Manjunath Kudlur, Josh
  Levenberg, Dan Man\'{e}, Rajat Monga, Sherry Moore, Derek Murray, Chris Olah,
  Mike Schuster, Jonathon Shlens, Benoit Steiner, Ilya Sutskever, Kunal Talwar,
  Paul Tucker, Vincent Vanhoucke, Vijay Vasudevan, Fernanda Vi\'{e}gas, Oriol
  Vinyals, Pete Warden, Martin Wattenberg, Martin Wicke, Yuan Yu, and Xiaoqiang
  Zheng.
\newblock {TensorFlow}: Large-scale machine learning on heterogeneous systems,
  2015.
\newblock URL \url{https://www.tensorflow.org/}.
\newblock Software available from tensorflow.org.

\bibitem[Buczy{\'n}ska et~al.(2015)Buczy{\'n}ska, Buczy{\'n}ski, and
  Micha{\l}ek]{buczynska2015hackbusch}
Weronika Buczy{\'n}ska, Jaros{\l}aw Buczy{\'n}ski, and Mateusz Micha{\l}ek.
\newblock The {H}ackbusch conjecture on tensor formats.
\newblock \emph{Journal de Math{\'e}matiques Pures et Appliqu{\'e}es},
  104\penalty0 (4):\penalty0 749--761, 2015.

\bibitem[Carroll \& Chang(1970)Carroll and Chang]{carroll1970analysis}
J~Douglas Carroll and Jih-Jie Chang.
\newblock Analysis of individual differences in multidimensional scaling via an
  {N}-way generalization of “{E}ckart-{Y}oung” decomposition.
\newblock \emph{Psychometrika}, 35\penalty0 (3):\penalty0 283--319, 1970.

\bibitem[Cohen \& Shashua(2016)Cohen and Shashua]{cohen2016convolutional}
Nadav Cohen and Amnon Shashua.
\newblock Convolutional rectifier networks as generalized tensor
  decompositions.
\newblock In \emph{International Conference on Machine Learning}, pp.\
  955--963, 2016.

\bibitem[Cohen et~al.(2016)Cohen, Sharir, and Shashua]{cohen2016expressive}
Nadav Cohen, Or~Sharir, and Amnon Shashua.
\newblock On the expressive power of deep learning: {A} tensor analysis.
\newblock In \emph{Conference on Learning Theory}, pp.\  698--728, 2016.

\bibitem[Cybenko(1989)]{cybenko1989approximation}
George Cybenko.
\newblock Approximation by superpositions of a sigmoidal function.
\newblock \emph{Mathematics of Control, Signals, and Systems (MCSS)},
  2\penalty0 (4):\penalty0 303--314, 1989.

\bibitem[Delalleau \& Bengio(2011)Delalleau and Bengio]{delalleau2011shallow}
Olivier Delalleau and Yoshua Bengio.
\newblock Shallow vs. deep sum-product networks.
\newblock In \emph{Advances in Neural Information Processing Systems}, pp.\
  666--674, 2011.

\bibitem[Eldan \& Shamir(2016)Eldan and Shamir]{eldan2016power}
Ronen Eldan and Ohad Shamir.
\newblock The power of depth for feedforward neural networks.
\newblock In \emph{Conference on Learning Theory}, pp.\  907--940, 2016.

\bibitem[Gers et~al.(1999)Gers, Schmidhuber, and Cummins]{gers1999learning}
Felix~A Gers, J{\"u}rgen Schmidhuber, and Fred Cummins.
\newblock Learning to forget: {C}ontinual prediction with {LSTM}.
\newblock 1999.

\bibitem[Grasedyck(2010)]{grasedyck2010hierarchical}
Lars Grasedyck.
\newblock Hierarchical singular value decomposition of tensors.
\newblock \emph{SIAM Journal on Matrix Analysis and Applications}, 31\penalty0
  (4):\penalty0 2029--2054, 2010.

\bibitem[Grasedyck \& Hackbusch(2011)Grasedyck and
  Hackbusch]{grasedyck2011introduction}
Lars Grasedyck and Wolfgang Hackbusch.
\newblock An introduction to hierarchical ({H}-) rank and {TT}-rank of tensors
  with examples.
\newblock \emph{Computational Methods in Applied Mathematics Comput. Methods
  Appl. Math.}, 11\penalty0 (3):\penalty0 291--304, 2011.

\bibitem[Graves et~al.(2013)Graves, Mohamed, and Hinton]{graves2013speech}
Alex Graves, Abdel-rahman Mohamed, and Geoffrey Hinton.
\newblock Speech recognition with deep recurrent neural networks.
\newblock In \emph{Acoustics, speech and signal processing (icassp), 2013 ieee
  international conference on}, pp.\  6645--6649. IEEE, 2013.

\bibitem[Hackbusch(2012)]{hackbusch2012tensor}
Wolfgang Hackbusch.
\newblock \emph{Tensor spaces and numerical tensor calculus}, volume~42.
\newblock Springer Science \& Business Media, 2012.

\bibitem[Harshman(1970)]{harshman1970foundations}
Richard~A Harshman.
\newblock Foundations of the parafac procedure: models and conditions for an
  "explanatory" multimodal factor analysis.
\newblock 1970.

\bibitem[He et~al.(2016)He, Zhang, Ren, and Sun]{he2016deep}
Kaiming He, Xiangyu Zhang, Shaoqing Ren, and Jian Sun.
\newblock Deep residual learning for image recognition.
\newblock In \emph{Proceedings of the IEEE conference on computer vision and
  pattern recognition}, pp.\  770--778, 2016.

\bibitem[Hornik et~al.(1989)Hornik, Stinchcombe, and
  White]{hornik1989multilayer}
Kurt Hornik, Maxwell Stinchcombe, and Halbert White.
\newblock Multilayer feedforward networks are universal approximators.
\newblock \emph{Neural networks}, 2\penalty0 (5):\penalty0 359--366, 1989.

\bibitem[Ilyashenko \& Yakovenko(2008)Ilyashenko and
  Yakovenko]{ilyashenko2008lectures}
Yu~S Ilyashenko and Sergei Yakovenko.
\newblock \emph{Lectures on analytic differential equations}, volume~86.
\newblock American Mathematical Soc., 2008.

\bibitem[Krizhevsky \& Hinton(2009)Krizhevsky and
  Hinton]{krizhevsky2009learning}
Alex Krizhevsky and Geoffrey Hinton.
\newblock Learning multiple layers of features from tiny images.
\newblock 2009.

\bibitem[LeCun et~al.(1990)LeCun, Boser, Denker, Henderson, Howard, Hubbard,
  and Jackel]{lecun1990handwritten}
Yann LeCun, Bernhard~E Boser, John~S Denker, Donnie Henderson, Richard~E
  Howard, Wayne~E Hubbard, and Lawrence~D Jackel.
\newblock Handwritten digit recognition with a back-propagation network.
\newblock In \emph{Advances in neural information processing systems}, pp.\
  396--404, 1990.

\bibitem[LeCun et~al.(1995)LeCun, Bengio, et~al.]{lecun1995convolutional}
Yann LeCun, Yoshua Bengio, et~al.
\newblock Convolutional networks for images, speech, and time series.
\newblock \emph{The handbook of brain theory and neural networks},
  3361\penalty0 (10):\penalty0 1995, 1995.

\bibitem[Martens \& Medabalimi(2014)Martens and
  Medabalimi]{martens2014expressive}
James Martens and Venkatesh Medabalimi.
\newblock On the expressive efficiency of sum product networks.
\newblock \emph{arXiv preprint arXiv:1411.7717}, 2014.

\bibitem[Mikolov et~al.(2011)Mikolov, Kombrink, Burget, {\v{C}}ernock{\`y}, and
  Khudanpur]{mikolov2011extensions}
Tom{\'a}{\v{s}} Mikolov, Stefan Kombrink, Luk{\'a}{\v{s}} Burget, Jan
  {\v{C}}ernock{\`y}, and Sanjeev Khudanpur.
\newblock Extensions of recurrent neural network language model.
\newblock In \emph{Acoustics, Speech and Signal Processing (ICASSP), 2011 IEEE
  International Conference on}, pp.\  5528--5531. IEEE, 2011.

\bibitem[Montufar et~al.(2014)Montufar, Pascanu, Cho, and
  Bengio]{montufar2014number}
Guido~F Montufar, Razvan Pascanu, Kyunghyun Cho, and Yoshua Bengio.
\newblock On the number of linear regions of deep neural networks.
\newblock In \emph{Advances in neural information processing systems}, pp.\
  2924--2932, 2014.

\bibitem[Novikov et~al.(2016)Novikov, Trofimov, and
  Oseledets]{novikov2016exponential}
Alexander Novikov, Mikhail Trofimov, and Ivan Oseledets.
\newblock Exponential machines.
\newblock \emph{arXiv preprint arXiv:1605.03795}, 2016.

\bibitem[Oseledets(2011)]{oseledets2011tensor}
Ivan~V Oseledets.
\newblock Tensor-train decomposition.
\newblock \emph{SIAM Journal on Scientific Computing}, 33\penalty0
  (5):\penalty0 2295--2317, 2011.

\bibitem[Pedregosa et~al.(2011)Pedregosa, Varoquaux, Gramfort, Michel, Thirion,
  Grisel, Blondel, Prettenhofer, Weiss, Dubourg, Vanderplas, Passos,
  Cournapeau, Brucher, Perrot, and Duchesnay]{scikit-learn}
F.~Pedregosa, G.~Varoquaux, A.~Gramfort, V.~Michel, B.~Thirion, O.~Grisel,
  M.~Blondel, P.~Prettenhofer, R.~Weiss, V.~Dubourg, J.~Vanderplas, A.~Passos,
  D.~Cournapeau, M.~Brucher, M.~Perrot, and E.~Duchesnay.
\newblock Scikit-learn: Machine learning in {P}ython.
\newblock \emph{Journal of Machine Learning Research}, 12:\penalty0 2825--2830,
  2011.

\bibitem[Raghu et~al.(2017)Raghu, Poole, Kleinberg, Ganguli, and
  Sohl-Dickstein]{raghu2016expressive}
Maithra Raghu, Ben Poole, Jon Kleinberg, Surya Ganguli, and Jascha
  Sohl-Dickstein.
\newblock On the expressive power of deep neural networks.
\newblock In \emph{Proceedings of the 34th International Conference on Machine
  Learning}, volume~70 of \emph{Proceedings of Machine Learning Research}, pp.\
   2847--2854, International Convention Centre, Sydney, Australia, 06--11 Aug
  2017. PMLR.

\bibitem[Shafarevich \& Hirsch(1994)Shafarevich and
  Hirsch]{shafarevich1994basic}
Igor~Rostislavovich Shafarevich and Kurt~Augustus Hirsch.
\newblock \emph{Basic algebraic geometry}, volume~2.
\newblock Springer, 1994.

\bibitem[Shwartz-Ziv \& Tishby(2017)Shwartz-Ziv and Tishby]{shwartz2017opening}
Ravid Shwartz-Ziv and Naftali Tishby.
\newblock Opening the black box of deep neural networks via information.
\newblock \emph{arXiv preprint arXiv:1703.00810}, 2017.

\bibitem[Stoudenmire \& Schwab(2016)Stoudenmire and Schwab]{NIPS2016_6211}
Edwin Stoudenmire and David~J Schwab.
\newblock Supervised learning with tensor networks.
\newblock In D.~D. Lee, M.~Sugiyama, U.~V. Luxburg, I.~Guyon, and R.~Garnett
  (eds.), \emph{Advances in Neural Information Processing Systems 29}, pp.\
  4799--4807. Curran Associates, Inc., 2016.

\bibitem[Sutskever et~al.(2013)Sutskever, Martens, Dahl, and
  Hinton]{sutskever2013importance}
Ilya Sutskever, James Martens, George Dahl, and Geoffrey Hinton.
\newblock On the importance of initialization and momentum in deep learning.
\newblock In \emph{International conference on machine learning}, pp.\
  1139--1147, 2013.

\bibitem[Szegedy et~al.(2015)Szegedy, Liu, Jia, Sermanet, Reed, Anguelov,
  Erhan, Vanhoucke, and Rabinovich]{szegedy2015going}
Christian Szegedy, Wei Liu, Yangqing Jia, Pierre Sermanet, Scott Reed, Dragomir
  Anguelov, Dumitru Erhan, Vincent Vanhoucke, and Andrew Rabinovich.
\newblock Going deeper with convolutions.
\newblock In \emph{Proceedings of the IEEE conference on computer vision and
  pattern recognition}, pp.\  1--9, 2015.

\bibitem[Wu et~al.(2016)Wu, Zhang, Zhang, Bengio, and
  Salakhutdinov]{wu2016multiplicative}
Yuhuai Wu, Saizheng Zhang, Ying Zhang, Yoshua Bengio, and Ruslan~R
  Salakhutdinov.
\newblock On multiplicative integration with recurrent neural networks.
\newblock In \emph{Advances in Neural Information Processing Systems}, pp.\
  2856--2864, 2016.

\bibitem[Yu et~al.(2017)Yu, Zheng, Anandkumar, and Yue]{yu2017long}
Rose Yu, Stephan Zheng, Anima Anandkumar, and Yisong Yue.
\newblock Long-term forecasting using tensor-train {RNNs}.
\newblock \emph{arXiv preprint arXiv:1711.00073}, 2017.

\end{thebibliography}
\bibliographystyle{iclr2018_conference}
\end{document}